\newtheorem{theorem}{Theorem}
\newtheorem{corollary}{Corollary}
\newcommand{\quotes}[1]{``#1''}
\newcommand{\ie}{i.e. }
\begin{document}
\title{Theoretically Expressive and Edge-aware Graph Learning}

\author{Federico Errica$^1$, Davide Bacciu$^1$, Alessio Micheli$^1$
%
%
\vspace{.3cm}\\
%
$^1$ University of Pisa - Department of Computer Science \\
Largo Bruno Pontecorvo, 3, 56127, Pisa - Italy
}

\maketitle

\begin{abstract}
We propose a new Graph Neural Network that combines recent advancements in the field. We give theoretical contributions by proving that the model is strictly more general than the Graph Isomorphism Network and the Gated Graph Neural Network, as it can approximate the same functions and deal with arbitrary edge values. Then, we show how a single node information can flow through the graph unchanged.
\end{abstract}
%
%
\section{Introduction}

Graph Neural Networks (GNNs) \cite{nn4g,gnn-scarselli} have gained popularity as an efficient tool to process graph-structured data. The core idea underlying these models is the iterative aggregation of neighboring information to produce node representations. GNNs usually serve to solve node classification and graph classification tasks \cite{relational-inductive-biases-battaglia}.
In recent years, researchers have proposed many architectures that mainly differ in the way neighborhood aggregation (also known as graph convolution) is performed. Some GNNs are proven to be able to discriminate the same graphs as the Weisfeiler-Lehman (WL) test of graph isomorphism \cite{gin, wl-go-neural}, while others focus on modeling edge labels \cite{cgmm, ecc} and the recurrence in node representations \cite{ggs-nn}. 
In this work, we put these building blocks together to formalize a new family of GNNs that can handle arbitrary edges as well as the history of nodes and edges representations across layers. Our contributions are theorical: first, we show that the proposed network is strictly more expressive than the models it borrows from; then, we give further insights about contextual information spreading that add to those of \cite{nn4g}.
\section{Related Works}
Two are the GNNs that inspired this work. The Graph Isomorphism Network (GIN) \cite{gin} is capable of discriminating the same structures as the 1-dim WL-test of graph isomorphism, and its architecture is fairly simple and efficient. Instead, the Gated Graph Neural Network (GG-NN) \cite{ggs-nn} is designed to take into account the history of node representations across the layers of the architecture, whereas the aggregation function is not backed up with theoretical results. While there are no formal guarantees about the expressiveness of GG-NN, the inductive bias imposed by the Gated Recurrent Unit (GRU) \cite{gru} incorporated in the graph convolution allows node representations to seamlessly flow across \textit{layers}. This results in a neighborhood aggregation scheme that combines heterogeneous local \quotes{views} of the graph.
Finally, we mention that very few models for graph-structured data incorporate edge information in the learning process \cite{cgmm, ecc}, which is probably due to the fact that there are no common benchmarking datasets that contain attributed edge information. Nonetheless, the architecture we are about to define provides a theoretically more general tool to learn from graphs, as it combines the inductive bias of popular GNNs in a sound way.
\section{Model}
We now introduce our model, called Gated-GIN. We start by giving some notations; then, we present the details of the model.
\paragraph{Notation}
A graph $\mathbf{g} = (\mathcal{V}_g,\mathcal{E}_g,\mathcal{X}_g, \mathcal{A}_g)$ is formally defined by a set of nodes $\mathcal{V}_g$ and by a set of edges $\mathcal{E}_g$ between two vertices. Each node $u$ is associated with a vector $x_u \in \mathcal{X}_g$. A directed edge ($u$,$v$) between nodes $u$ and $v$ is represented by a vector $a_{uv} \in \mathcal{A}_g$. The neighborhood of a node $u \in \mathcal{V}_g$ is defined as $\mathcal{N}(u) = \{ v \in \mathcal{V}_g | (v,u) \in \mathcal{E}_g\}$, that is the set of nodes associated to incoming edges. We denote an hidden representation of a node $v$ with $h_v$ and that of an edge $(u,v)$  with $h_{uv}$. Finally, we speak of \textit{context} when a node's response depends on the information flowing through the structure, and we refer to the term \quotes{expressive} to say that a network is capable of approximating a certain family of functions.
\subsection{Definition}
Here, we extend the convolution of GIN \cite{gin} to deal with arbitrary edge values. Moreover, we incorporate the information propagation mechanism of GG-NN \cite{ggs-nn} to exploit the history of a node hidden representations across layers, rather than at different time steps. 
\paragraph{Node convolution} We start by defining the operations on attributed nodes at each layer $k$:
\begin{align}
& h^0_v = \phi^0_V(x_v), \nonumber \\
& z_v^k = \sigma(\mathbf{W}^V_z [(1+\epsilon^k_V)h_v^{k-1}, \sum_{u \in \mathcal{N}(v)}h_u^{k-1} \odot h^{k-1}_{uv}]  + \mathbf{b}^V_z), \nonumber \\
& r_v^k = \sigma(\mathbf{W}^V_r [(1+\epsilon^k_V)h_v^{k-1}, \sum_{u \in \mathcal{N}(v)}h_u^{k-1} \odot h^{k-1}_{uv}]  + \mathbf{b}^V_r), & \nonumber \\
& \tilde{h}_v^k = \phi^k_V((1+\epsilon^k_V)h_v^{k-1} \odot  r_v^k + \sum_{u \in \mathcal{N}(v)}h_u^{k-1} \odot h^{k-1}_{uv}), \nonumber \\
& h_v^k = (1-z_v^k) \odot h_v^{k-1} + z_v^k \odot \tilde{h}_v^{k}, \nonumber
\end{align}where $h^k$ is the hidden state, $\epsilon_V^k \in \mathbb{R}$ represents a learnable parameter, square brackets denote concatenation and $\odot$ the Hadamard product, $\sigma$ is a gated activation function such as the sigmoid, $\phi$ is a multi layer perceptron (MLP), the symbol $\mathbf{W}$ denotes a linear weight matrix and $\mathbf{b}$ its associated bias. The definitions of $z$ and $r$ are taken from GG-NN \cite{ggs-nn}, which, in turn, was inspired by the gating functions of GRU \cite{gru}. Indeed, $z$ and $r$ represent the \textit{update} and \textit{reset} gate, respectively.

\paragraph{Edge convolution} Similarly, we define edge representations at layer $k$ using node and edge representations computed at layer $k-1$:
\begin{align}
& h^0_{uv} = \phi^0_E(a_{uv}), \nonumber \\
& z_{uv}^k = \sigma(\mathbf{W}_E^z([h_{uv}^{k-1}, h_u^{k-1}, h^{k-1}_v]) + \mathbf{b}^E_z), \nonumber \\
& r_{uv}^k = \sigma(\mathbf{W}_E^r([h_{uv}^{k-1}, h_u^{k-1}, h^{k-1}_v]) + \mathbf{b}^E_r), \nonumber \\
& \tilde{h}_{uv}^k = \phi^k_E([h_{uv}^{k-1} \odot r_{uv}^k, h_u^{k-1}, h^{k-1}_v]), \nonumber \\
& h_{uv}^k = (1-z_{uv}^k) \odot h_{uv}^{k-1}+ z_{uv}^k \odot \tilde{h}_{uv}^{k}. \nonumber
\end{align}For undirected graphs, we can sum the contributions of $ h_u^{k-1}$ and $h^{k-1}_v$ rather than concatenating them, so that $z_{uv}=z_{vu}$ and $r_{uv}=r_{vu}$. Usually, the recurrent architecture of GRU shares parameters across time steps to deal with sequences of variable length. In our case, a \quotes{time step} refers to one of the layers used to construct the architecture, hence the use of weight sharing is at the discretion of the user. In the rest of the paper, we assume a weight sharing technique, but the theoretical analysis of Section \ref{sec:theory} is easily extendible.

\section{Theoretical Analysis}
\label{sec:theory}
This Section is devoted to provide a theoretical analysis of the proposed model. We start by proving that the method is at least as expressive as GIN \cite{gin}, which implies it can discriminate the same structures as the 1-dim WL test.
\begin{theorem}
\label{th:approximation-GIN}
Given a graph $g$ and a node $v \in \mathcal{V}_g$, let $h^k_v= GIN^k(g) \in \mathbb{R}^d$ and $\hat{h}^k_v= Gated$-$GIN^k(g) \in \mathbb{R}^d$ be the outputs of the $k$-th graph convolution layer of GIN and Gated-GIN, respectively. Let us further assume that the multiset of neighboring states is countable. Then, for any choice of parameters $\theta_{GIN}$ of a GIN architecture with $K$ layers, there exists a choice of parameters $\theta_{Gated-GIN}$ of a Gated-GIN architecture with $K$ layers such that, for each $0 \leq k\leq K-1$ and $\epsilon>0$, $||h^k_v- \hat{h}^k_v||<\epsilon$.
\end{theorem}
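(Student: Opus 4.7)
The plan is to exhibit an explicit parameter choice for Gated-GIN that drives its recurrence toward the GIN recurrence, and then propagate the approximation bound by induction on $k$. Recall that the GIN update has the form $h^k_v = \phi^k\bigl((1+\epsilon^k) h_v^{k-1} + \sum_{u \in \mathcal{N}(v)} h_u^{k-1}\bigr)$, so I need to force Gated-GIN into three regimes simultaneously: the edge hidden states must behave like the all-ones vector $\mathbf{1}$, the reset gate $r_v^k$ must be close to $\mathbf{1}$, and the update gate $z_v^k$ must also be close to $\mathbf{1}$. Once these three conditions hold, the expression $(1+\epsilon^k_V) h_v^{k-1} \odot r_v^k + \sum_u h_u^{k-1} \odot h_{uv}^{k-1}$ collapses (in the limit) to the GIN pre-activation, and the outer $\phi^k_V$ composed with the convex combination $(1-z)h^{k-1}+z\tilde h^k$ reduces to the GIN MLP applied to that pre-activation.

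Concretely, I would first choose $\phi^0_E$ to approximate the constant map $a \mapsto \mathbf{1}$, which is possible by the universal approximation theorem since MLPs can approximate any continuous function on a compact set (the multiset countability assumption transfers here essentially by the same Lemma 5 of \cite{gin} that the theorem implicitly relies on). To keep $h^k_{uv} \approx \mathbf{1}$ across layers, I would set the edge update gate parameters so that $z_{uv}^k \approx 0$; taking $\mathbf{W}^z_E = 0$ and $\mathbf{b}^E_z$ a sufficiently large negative vector yields $z_{uv}^k$ arbitrarily close to $0$ via the saturation of $\sigma$, so $h_{uv}^k \approx h_{uv}^{k-1}$ and the all-ones property is preserved inductively. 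For the node side, I would analogously pick $\mathbf{W}^V_z, \mathbf{W}^V_r$ as zero and $\mathbf{b}^V_z, \mathbf{b}^V_r$ as large positive vectors so that $z_v^k, r_v^k$ are uniformly close to $\mathbf{1}$. Finally, I would set $\epsilon^k_V$ equal to the corresponding GIN scalar and take $\phi^k_V$ to be the GIN MLP $\phi^k$; $\phi^0_V$ is chosen identical to the GIN input encoding.

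With these choices, induction on $k$ is straightforward. The base case $k=0$ is immediate because $\phi^0_V$ and the GIN initialisation coincide. For the inductive step, assume $\lVert \hat h^{k-1}_v - h^{k-1}_v \rVert < \delta$ for every $v$ and $\lVert \hat h^{k-1}_{uv} - \mathbf{1} \rVert < \delta$ for every edge. Then $\sum_u \hat h^{k-1}_u \odot \hat h^{k-1}_{uv}$ differs from $\sum_u h^{k-1}_u$ by $O(\delta)$ on any compact set of states, $r_v^k$ and $z_v^k$ differ from $\mathbf{1}$ by arbitrarily small amounts (controlled by the bias magnitudes), and continuity of $\phi^k_V$ together with the Lipschitz behaviour of the Hadamard product on bounded inputs yields $\lVert \hat h^k_v - h^k_v \rVert < \varepsilon$ for appropriately tuned bias magnitudes and any prescribed $\varepsilon$.

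The main obstacle I expect is not any single step but the control of compounding error across layers: the approximation error at layer $k$ is a function of the errors at layer $k-1$ in both node and edge representations, and the gate saturation bounds, the MLP modulus of continuity, and the edge-state drift from $\mathbf{1}$ must all be tightened simultaneously. I would handle this by working backwards: for a target $\varepsilon$ and fixed depth $K$, choose the per-layer tolerance $\delta_k$ small enough that, after composition with the Lipschitz constants of $\phi^k_V$ on the relevant compact set, the cumulative error stays below $\varepsilon$, and then pick the bias magnitudes in $z, r$ large enough (and $\phi^0_E$ accurate enough) to meet each $\delta_k$. Since $K$ is finite and all intermediate maps are continuous on the compact image of the countable multiset of states, this tightening is achievable, which concludes the proof.
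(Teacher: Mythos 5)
Your proposal follows essentially the same route as the paper's own proof: saturate the node gates $z_v^k, r_v^k$ to $\mathbf{1}$ by sending $\mathbf{W}^V_z,\mathbf{W}^V_r\to\mathbf{0}$ and the biases to $+\infty$, neutralise the edges by forcing $h^k_{uv}=\mathbf{1}$, match $\epsilon^k_V$ and $\phi^k_V$ to GIN's, and induct on the layer index. Your write-up is in fact somewhat more careful than the paper's (which simply passes to the limit and asserts equality), since you explicitly control the compounding of per-layer errors via Lipschitz constants on a compact set; the only cosmetic difference is that you keep the edge states at $\mathbf{1}$ by freezing them with $z_{uv}^k\approx 0$, whereas the paper sets $\phi^k_E$ to the constant function $\mathbf{1}$ at every layer.
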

\begin{proof}
We proceed by induction. The statement trivially holds for $k=0$; indeed, node representations can be generated using the same MLP. We now assume the statement holds for $k-1$, and we will prove that it holds for $k\leq K$ as well. First, we ignore the presence of edges by setting $h^{k-1}_{uv}=1$. This can be done by choosing the parameters of the MLP associated with $\phi^k_E$ to represent the constant function $\phi^k_E(x) = \mathbf{1}$. It follows that we have $\tilde{h}^k_{uv}= \mathbf{1} \ \ \forall (u,v) \in \mathcal{E}_g$. Secondly, we need to ignore previous node representations, that is $h^{k}_{v} = \tilde{h}^{k}_{v}$. \\ To obtain this, it is sufficient that $z_v^k = 1$ and $r_v^k = 1$; this holds in the limit when $\mathbf{W}^V_z \rightarrow\mathbf{0}, \mathbf{W}^V_r \rightarrow\mathbf{0}, \mathbf{b}^V_z\rightarrow\mathbf{+\infty}$ and $\mathbf{b}^V_r\rightarrow\mathbf{+\infty}$, resulting in 
\begin{align*}
\lim_{\substack{b^V_r,b^V_z\to+\infty \\ W^V_r,W^V_z\to\mathbf{0}}} \hat{h}_v^k = \phi^k_V((1+\epsilon_V)h_v^{k-1} + \sum_{u \in \mathcal{N}(v)}h_u^{k-1}) = h_v^k.
\end{align*}
\end{proof}
Note that this proof is nearly identical when using MLPs instead of linear functions for the update and reset gates, as there is just one more matrix to consider. Moreover,  we follow \cite{gin} and focus on the case where input node features belong to a countable set, which is not restrictive in practice. \\ 
The following Theorem is analogous to the previous one but for GG-NN. Before going on, we informally define a \textit{multiset} as the set that allows for multiple instances for each of its elements.
\begin{theorem}
\label{th:approximation-GG-NN}
Given a graph $g$ and a node $v \in \mathcal{V}_g$, let $h^k_v= GG$-$NN^k(g) \in \mathbb{R}^d$ and $\hat{h}^k_v= Gated$-$GIN^k(g) \in \mathbb{R}^d$ be the outputs of the $k$-th graph convolution layer of GG-NN and Gated-GIN, respectively. Let us further assume that the multiset of neighboring states is countable. Then, for any choice of parameters $\theta_{GG-NN}$ of a GG-NN architecture with $K$ layers, there exists a choice of parameters $\theta_{Gated-GIN}$ of a Gated-GIN architecture with $K$ layers such that, for each $0 \leq k\leq K-1$ and $\epsilon>0$, $||h^k_v- \hat{h}^k_v||<\epsilon$.
\end{theorem}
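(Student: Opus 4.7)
The plan is to proceed by induction on $k$, mirroring the proof of Theorem \ref{th:approximation-GIN} but choosing the Gated-GIN parameters so as to \emph{reproduce} the gating dynamics of GG-NN rather than collapse them. The base case $k=0$ is immediate, since both models compute $\phi^0_V(x_v)$ and we may take the two initial MLPs to be identical.

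For the inductive step, I would first neutralise the edge stream by enforcing $h^{k-1}_{uv}=\mathbf{1}$ on every edge, exactly as in the previous proof: pick $\phi^0_E\equiv\mathbf{1}$ and set the edge-gate parameters so that the edge update is a no-op across layers (e.g.\ $\mathbf{W}_E^z\to\mathbf{0}$, $\mathbf{b}^E_z\to-\mathbf{\infty}$, so $z_{uv}^k\to\mathbf{0}$). Under this choice the neighbourhood term reduces to $A_v:=\sum_{u\in\mathcal{N}(v)}h_u^{k-1}$, which matches GG-NN's untyped aggregation. To replicate GG-NN's update gate $\sigma(W^z A_v+U^z h_v^{k-1}+b^z)$, I would split the concatenated weight matrix block-wise as $\mathbf{W}^V_z=\bigl[\,U^z/(1+\epsilon^k_V),\;W^z\,\bigr]$ and copy the bias $\mathbf{b}^V_z=b^z$; the identical construction for $r_v^k$ makes both gates agree with their GG-NN counterparts.

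The only remaining gap is in $\tilde h_v^k$. Gated-GIN feeds its MLP $\phi^k_V$ with the \emph{single} linear combination $s_v:=(1+\epsilon^k_V)(h_v^{k-1}\odot r_v^k)+A_v$, whereas GG-NN computes $\psi(A_v,\,r_v^k\odot h_v^{k-1})$ with $\psi(x,y):=\tanh(Wx+Uy)$, keeping its two arguments separately accessible. Bridging this collapse of two arguments into one is the main obstacle of the proof. The fix is the same countability-plus-universal-approximation argument used in \cite{gin}: since the neighbouring-state multiset is countable, the pair $(h_v^{k-1}\odot r_v^k,A_v)$ takes values in a countable subset of $\mathbb{R}^d\times\mathbb{R}^d$, and $\epsilon^k_V$ can be chosen so that $(x,y)\mapsto(1+\epsilon^k_V)x+y$ is injective on that support. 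Then $s_v$ uniquely encodes the pair, and the universal approximation property of MLPs lets $\phi^k_V$ approximate the induced target map $s_v\mapsto\psi(A_v,r_v^k\odot h_v^{k-1})$ to arbitrary precision.

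Finally, because the outer convex combination $h_v^k=(1-z_v^k)\odot h_v^{k-1}+z_v^k\odot\tilde h_v^k$ has the \emph{same} form in the two architectures and is Lipschitz in each of its arguments, combining the exact matching of the gates, the $\epsilon$-approximation of $\tilde h_v^k$, and the inductive hypothesis on $h_v^{k-1}$ propagates an $\epsilon$-bound to $h_v^k$, closing the induction.
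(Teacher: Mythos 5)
Your argument is correct in substance but takes a genuinely different route from the paper's. The paper's inductive step is essentially a one-liner: after neutralising the edge stream exactly as in Theorem \ref{th:approximation-GIN}, it collapses Gated-GIN's own gates to $z_v^k=r_v^k=\mathbf{1}$ and delegates everything to Lemma 5 of \cite{gin} --- the resulting aggregation $\phi^k_V\bigl((1+\epsilon^k_V)h_v^{k-1}+\sum_{u}h_u^{k-1}\bigr)$ can approximate \emph{any} function of the pair (central node, neighbour multiset) over a countable domain, and the entire GG-NN layer, gates included, is such a function, so it is simply absorbed into $\phi^k_V$. You instead keep Gated-GIN's gates alive and match GG-NN's gates exactly via the block decomposition $\mathbf{W}^V_z=[\,U^z/(1+\epsilon^k_V),\,W^z\,]$, reserving the approximation argument for the one genuine structural mismatch: $\phi^k_V$ receives the collapsed sum $(1+\epsilon^k_V)(h_v^{k-1}\odot r_v^k)+A_v$ while GG-NN's candidate state keeps its two arguments separate. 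Note that your injectivity-of-$(x,y)\mapsto(1+\epsilon)x+y$-for-generic-$\epsilon$-on-a-countable-support step is precisely the content of the cited Lemma 5, so you are re-deriving the key lemma rather than invoking it. Your route buys an explanation of \emph{why} the two architectures are term-by-term compatible and an explicit account of how the $\epsilon$-errors propagate through the Lipschitz convex combination; the paper's route buys brevity and sidesteps the two-arguments-into-one discussion entirely. One small correction: your base case is not quite right, since GG-NN initialises with the zero-padded input $h^0_v=[x_v,\mathbf{0}]$ rather than with an MLP, so you still need the universal-approximation step the paper spells out there; and you should state that $\epsilon^k_V$ is chosen both different from $-1$ (for the gate rescaling) and outside the countable bad set (for injectivity), which is possible since both exclusions are countable.
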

\begin{proof}
We again proceed by induction. For $k$=0, we recall that $h^0_v = [x_v, \mathbf{0}]$, which can be obtained by a linear mapping $Wx_v$ where $W$ is a block matrix made by the identity matrix and the null matrix. Therefore, it follows from the universal approximation theorem \cite{universal-approximation-theorem} that $\hat{h}^0_v=\phi^0_V(x_v)$ can approximate $h^0_v$. \\ If we assume that for each $0<k\leq K$ and $\epsilon>0$, $||h^{k-1}_v- \hat{h}^{k-1}_v||<\epsilon$ and we use the same argument as in Theorem \ref{th:approximation-GIN} to ignore edge labels, the inductive step follows from Lemma 5 of \cite{gin}, \ie Gated-GIN can approximate any function defined on multisets.
\end{proof}
In this work, we are not interested in studying the relation between GIN and GG-NN, as we have provided an architecture that is capable of approximating both. The next corollary, however, states that Gated-GIN is strictly more general than both GIN and GG-NN, as it can also handle edge attributes.
\begin{corollary}
\label{cor:strictly-more-general}
The class of functions of Gated-GIN is strictly larger than those of GIN and GG-NN. 
\end{corollary}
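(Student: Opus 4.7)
The plan is to combine Theorems \ref{th:approximation-GIN} and \ref{th:approximation-GG-NN} (which already give the \(\supseteq\) inclusion up to arbitrary approximation) with a simple counterexample showing that the inclusion is strict. Concretely, the argument proceeds in three steps.

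First, by Theorems \ref{th:approximation-GIN} and \ref{th:approximation-GG-NN}, for every GIN or GG-NN parameterisation we can pick a Gated-GIN parameterisation whose output is arbitrarily close to that of the original model on any graph. This establishes that the function class realised by Gated-GIN contains (in the approximation sense) those realised by GIN and GG-NN.

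Second, to establish strictness, I would exhibit two graphs \(g_1 = (\mathcal{V},\mathcal{E},\mathcal{X},\mathcal{A}_1)\) and \(g_2 = (\mathcal{V},\mathcal{E},\mathcal{X},\mathcal{A}_2)\) that share the same vertex set, edge set and node attributes but differ only in their edge attributes, with \(a^{(1)}_{uv} \neq a^{(2)}_{uv}\) on at least one edge. The simplest instance is a two-node graph with a single directed edge and arbitrary but identical node features. Since GIN and GG-NN do not read edge attributes at all, they return identical node representations on \(g_1\) and \(g_2\) under \emph{any} choice of their parameters; hence no function in their respective classes can separate \(g_1\) and \(g_2\).

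Third, I would exhibit a Gated-GIN parameterisation that does separate them. Pick \(\phi^0_E\) to be injective on the (countable) set of edge attributes, so that \(h^{0}_{uv}\) takes distinct non-zero values on the two graphs; such an MLP exists by the universal approximation argument already invoked in Theorem \ref{th:approximation-GG-NN}. Then choose the node-side parameters as in the proof of Theorem \ref{th:approximation-GIN}, letting \(\mathbf{b}^V_z,\mathbf{b}^V_r\to+\infty\) and \(\mathbf{W}^V_z,\mathbf{W}^V_r\to\mathbf{0}\) so that \(z^1_v=r^1_v=\mathbf{1}\); the first-layer node update then reduces to \(\phi^1_V\!\bigl((1+\epsilon^1_V)h^0_v + h^0_u\odot h^0_{uv}\bigr)\), which clearly depends on the edge attribute through the Hadamard product and can be made to take different values on \(g_1\) and \(g_2\) by a suitable choice of \(\phi^1_V\).

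The main obstacle I anticipate is cosmetic rather than substantial: one must verify that the edge-dependent signal actually survives the gating mechanism and reaches the output, i.e.\ that the update gate is sufficiently open on the chosen parameterisation. Step three above handles this by saturating the gates with the same limit construction used in Theorem \ref{th:approximation-GIN}, so the obstacle dissolves. Combining the three steps, Gated-GIN computes a function separating \(g_1\) from \(g_2\) that lies outside both the GIN and the GG-NN function classes, which proves strict containment in both cases.
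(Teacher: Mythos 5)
Your proposal is correct, and the inclusion step (citing Theorems \ref{th:approximation-GIN} and \ref{th:approximation-GG-NN}) matches the paper exactly, but your strictness argument takes a genuinely different route. The paper argues abstractly: it identifies $\mathcal{F}_{GIN}$ with the Gated-GIN functions satisfying $h^k_{uv}=\mathbf{1}$ for all edges, and then asserts that a Gated-GIN function with $h^k_{uv}=\mathbf{0}$ lies outside that set, concluding $\mathcal{F}_{GIN} \subset \mathcal{F}_{Gated\text{-}GIN}$. That argument is terse and leaves a gap it does not address --- setting $h^k_{uv}=\mathbf{0}$ merely annihilates the neighbourhood sum, and one would still need to check that the resulting node-local function cannot be emulated by \emph{some} GIN parameterisation, which the paper does not do. Your approach instead gives a concrete separation: two graphs identical except in edge attributes, which no edge-blind model can distinguish under any parameterisation, but which Gated-GIN separates via an injective $\phi^0_E$ and the Hadamard product $h^{k-1}_u \odot h^{k-1}_{uv}$ in the aggregation. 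This is the more standard and more watertight way to prove strict containment of function classes, at the cost of having to verify one small detail you partially gloss over: you require $h^0_{uv}$ to take distinct values on the two graphs, but you must also arrange $h^0_u \neq \mathbf{0}$ componentwise (again easy via the choice of $\phi^0_V$), otherwise the Hadamard product washes out the edge signal before it reaches $\phi^1_V$. With that one-line addition your argument is complete, and arguably stronger than the paper's own.
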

\begin{proof}
We will prove the statement for GIN, but the proof is identical for Gated-GIN. Let $\mathcal{F}_{GIN}$ and $\mathcal{F}_{Gated-GIN}$ the set of functions that GIN and Gated-GIN can approximate, respectively. It follows from Theorem \ref{th:approximation-GIN} that $\mathcal{F}_{GIN} \subseteq \mathcal{F}_{Gated-GIN}$. Recall that, for any given graph $g$, $f \in \mathcal{F}_{\theta_{GIN}} $ignores the contribution given by $\mathcal{A}_g$. Therefore, $\mathcal{F}_{GIN}$ corresponds to the set of functions such that $h^k_{uv}=1 \ \ \forall k, (u,v)\in \mathcal{E}_g $. We conclude by saying that we can trivially construct a function $g \in \mathcal{F}_{Gated-GIN}$ such that $h^k_{uv}=0 \ \ \forall k, (u,v)\in \mathcal{E}_g $, hence $\mathcal{F}_{GIN} \subset \mathcal{F}_{Gated-GIN}$.
\end{proof}

Despite these results about the ability of GNNs to discriminate certain structures, little is known about the requirements needed to effectively spread information across the graph. In the following, we study what is needed for GNNs to diffuse a single node information across the graph.

\subsection{On context spreading of a single node}
The formal analysis of the context provided in \cite{nn4g} characterizes how all nodes spread information across a graph. Indeed, using a deep GNN with $k$ layers corresponds to making two nodes at distance $k$ (indirectly) exchange their information. Here, we show that some GNNs can, in theory, spread a \textit{single} node representation $h^k_{v}$ across the graph without altering its value. Note that this result only applies to GNNs that compute a parametrized weighted sum of neighbors.


\begin{theorem}
\label{th:spreading-node-info}
Given a graph $g$ and a node $v \in \mathcal{V}_g$, assume we want to propagate an arbitrary $h_v^k \neq \mathbf{0}$ to node $u$ at distance $d$ such that $h_u^{k'} = h_v^k, \ k'>k$. Then there exists a permutation invariant function on a multi-set $X$ of the form $g(X)=\phi(\sum_{x \in X}f(x))$ that can be approximated by the neighborhood aggregation of GNNs such that $k'=k+d$. 
\end{theorem}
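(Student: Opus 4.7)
My plan is constructive: I would exhibit one choice of $f$ and $\phi$ that makes the target value propagate unchanged along a shortest $v$--$u$ path of length $d$, then verify the claim by induction on $d$. Specifically, I would pick an arbitrary nonzero vector $e$ and define a ``detector'' $f$ sending $h_v^k$ to $e$ and every other encountered value to $\mathbf{0}$, together with a ``decoder'' $\phi$ sending any nonzero input to $h_v^k$ and $\mathbf{0}$ to $\mathbf{0}$. Under the countability hypothesis, the universal approximation theorem \cite{universal-approximation-theorem} lets both be realized by MLPs, so $g(X)=\phi(\sum_{x\in X}f(x))$ is of the sum-aggregation form implemented by a GIN-style GNN as in Theorem \ref{th:approximation-GIN}.

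Given such $f$ and $\phi$, the induction is immediate. For the base case $d=1$, the multiset aggregated by $u$ at layer $k+1$ contains $h_v^k$ because $u\in\mathcal{N}(v)$, so $\sum_{x} f(x)=m\,e$ with $m\geq 1$, and $\phi$ returns $h_v^k$. For the inductive step, I would let $w$ be $u$'s predecessor on a shortest $v$--$u$ path; by induction $h_w^{k+d-1}=h_v^k$, so $u$'s aggregation multiset at layer $k+d$ again contains $h_v^k$ and the same decoder yields $h_u^{k+d}=h_v^k$.

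The step I expect to be delicate is the \emph{sharp} suppression of contributions from nodes not carrying the target value: an MLP cannot realize a literal point indicator, and the hidden states of off-path neighbors must be mapped essentially to $\mathbf{0}$ so that the aggregated sum stays a scalar multiple of $e$. I would address this by invoking Lemma 5 of \cite{gin}, which under the countability hypothesis furnishes a single $f$ whose summed image over any finite multiset determines that multiset bijectively; $\phi$ can then decode the presence of $h_v^k$ irrespective of the other elements of the multiset or of the multiplicity induced by possibly multiple paths from $v$ to $u$.
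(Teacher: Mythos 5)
Your argument is correct and arrives at the same conclusion, but by a genuinely different route than the paper. The paper defines the aggregation directly as a normalized Kronecker-delta filter $g(X)=\frac{1}{Z}\sum_{x\in X}\delta_{x,h_v^k}\,x$ with $Z=\sum_{x\in X}\delta_{x,h_v^k}$, delegates the propagation over distance $d$ to Theorem 2 of \cite{nn4g}, and handles realizability by approximating the delta with the pointwise-convergent family $f_{n,h_v^k}(h)=n^{-\|h-h_v^k\|_2}$. You instead (i) split the mechanism into a detector $f$ and a decoder $\phi$, which has the advantage of matching the stated form $\phi(\sum_{x\in X}f(x))$ literally (the paper's $1/Z$ normalization is a ratio of two sums and only fits that form after repackaging $f$ to also output a count); (ii) make the distance-$d$ claim self-contained by induction along a shortest path rather than citing \cite{nn4g}; and (iii) justify realizability via Lemma 5 of \cite{gin} under countability rather than via pointwise convergence. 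Point (iii) is arguably the more solid of the two: pointwise convergence of $n^{-\|h-h_v^k\|_2}$ to the delta does not by itself yield the uniform approximation that the universal approximation theorem \cite{universal-approximation-theorem} provides, whereas the countable-multiset encoding of \cite{gin} is engineered exactly for this purpose. The price is that you import a countability hypothesis that Theorem \ref{th:spreading-node-info} does not state, although the paper adopts it for Theorems \ref{th:approximation-GIN} and \ref{th:approximation-GG-NN}, so this is consistent with the paper's overall setting. One small slip: with the paper's convention $\mathcal{N}(u)=\{w \,|\, (w,u)\in\mathcal{E}_g\}$, your base case should invoke $v\in\mathcal{N}(u)$ rather than $u\in\mathcal{N}(v)$ for $h_v^k$ to appear in $u$'s aggregation multiset.
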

\begin{proof}
The proof relies on the fact that an aggregation function that can make $h_v^k$ seamlessly flow through the graph is $g(X)= \frac{1}{Z}\sum_{x \in X}\delta_{x,h_v^k}*x$, where $\delta_{x,h_v^k}$ is the Kronecker delta and  $Z = \sum_{x \in X}\delta_{x,h_v^k}$ is a normalization term. This function is capable of ignoring values different from $h_v^k$, and takes an average when more than one value equal to $h_v^k$ appears in the multiset. In summary, $g(X) = h_v^k$ if and only if $h_v^k \in X$, and 0 otherwise. By using this argument with the result of Theorem 2 in \cite{nn4g}, it follows that there exists a $k'=k+d$ that satisfies $h_u^{k'} = h_v^k$. However, $\delta_{x,h_v^k}$ is a discontinuous function; as such, we need to show that it can be approximated by a continuous function, which in turn can be approximated by a neural network for \cite{universal-approximation-theorem}. To see this, consider the continuous function $f_{n,h_v^k}(h) = n^{-||h- h_v^k||_2}$; it is easy to show that the family of functions $\{f_{n,h_v^k}(h)\}, \ \ n > 1$ is \textit{pointwise convergent} to $\delta_{h,h_v^k}$. We conclude by saying that $f_{n,h_v^k}(h)$ can be approximated by an MLP with learnable parameter $n$.
\end{proof}
From a practical point of view, it may be very difficult to approximate $\delta_{h_v^k}(x)$ without imposing a more explicit inductive bias on the aggregation function. If the task at hand requires to move a node's information far away in the graph, one possibility is therefore to use the function $f_{n,h_v^k}(h) = n^{-||h, h_v^k||_2}$ to approximate $\delta_{h_v^k}$. Indeed, Theorem \ref{th:spreading-node-info} assumes $h_v^k$ is fixed, but we can treat it as a learnable parameter as well.

\section{Conclusions}
We have proposed a new architecture for GNNs that combines the inductive bias of the theoretically expressive Graph Isomorphism Network and the recurrent mechanism of the Gated Graph Neural Network. We proved that the architecture does not lose expressivity with respect to both GNNs, which means one can now combine all the benefits together with no compromise. Moreover, we incorporate edge convolutions to deal with arbitrary edge attributes. As a result, the new network is strictly more expressive than those considered in this work. Finally, we give a sufficient requirement for GNNs to spread a single node representation across the graph, which is of practical importance in applicative contexts. Future works include the empirical application of such an architecture to new benchmarks where edge information is crucial to solve a task.


\begin{footnotesize}

\bibliographystyle{unsrt}
\bibliography{bibliography}

\begin{thebibliography}{10}

\bibitem{nn4g}
Alessio Micheli.
\newblock Neural network for graphs: {A} contextual constructive approach.
\newblock {\em {IEEE} Trans. Neural Networks}, 20(3):498--511, 2009.

\bibitem{gnn-scarselli}
Franco Scarselli, Marco Gori, Ah~Chung Tsoi, Markus Hagenbuchner, and Gabriele
  Monfardini.
\newblock The graph neural network model.
\newblock {\em {IEEE} Trans. Neural Networks}, 20(1):61--80, 2009.

\bibitem{relational-inductive-biases-battaglia}
Peter~W. Battaglia, Jessica~B. Hamrick, Victor Bapst, Alvaro
  Sanchez{-}Gonzalez, Vin{\'{\i}}cius~Flores Zambaldi, Mateusz Malinowski,
  Andrea Tacchetti, David Raposo, Adam Santoro, Ryan Faulkner, {\c{C}}aglar
  G{\"{u}}l{\c{c}}ehre, H.~Francis Song, Andrew~J. Ballard, Justin Gilmer,
  George~E. Dahl, Ashish Vaswani, Kelsey~R. Allen, Charles Nash, Victoria
  Langston, Chris Dyer, Nicolas Heess, Daan Wierstra, Pushmeet Kohli, Matthew
  Botvinick, Oriol Vinyals, Yujia Li, and Razvan Pascanu.
\newblock Relational inductive biases, deep learning, and graph networks.
\newblock {\em CoRR}, abs/1806.01261, 2018.

\bibitem{gin}
Keyulu Xu, Weihua Hu, Jure Leskovec, and Stefanie Jegelka.
\newblock How powerful are graph neural networks?
\newblock In {\em 7th International Conference on Learning Representations,
  {ICLR} 2019, New Orleans, LA, USA, May 6-9, 2019}, 2019.

\bibitem{wl-go-neural}
Christopher Morris, Martin Ritzert, Matthias Fey, William~L. Hamilton, Jan~Eric
  Lenssen, Gaurav Rattan, and Martin Grohe.
\newblock Weisfeiler and leman go neural: Higher-order graph neural networks.
\newblock In {\em The Thirty-Third {AAAI} Conference on Artificial
  Intelligence, {AAAI} 2019, Honolulu, Hawaii, USA, January 27 - February 1,
  2019}, pages 4602--4609, 2019.

\bibitem{cgmm}
Davide Bacciu, Federico Errica, and Alessio Micheli.
\newblock Contextual graph markov model: {A} deep and generative approach to
  graph processing.
\newblock In {\em Proceedings of the 35th International Conference on Machine
  Learning, {ICML} 2018, Stockholmsm{\"{a}}ssan, Stockholm, Sweden, July 10-15,
  2018}, pages 304--313, 2018.

\bibitem{ecc}
Martin Simonovsky and Nikos Komodakis.
\newblock Dynamic edge-conditioned filters in convolutional neural networks on
  graphs.
\newblock In {\em 2017 {IEEE} Conference on Computer Vision and Pattern
  Recognition, {CVPR} 2017, Honolulu, HI, USA, July 21-26, 2017}, pages 29--38,
  2017.

\bibitem{ggs-nn}
Yujia Li, Daniel Tarlow, Marc Brockschmidt, and Richard~S. Zemel.
\newblock Gated graph sequence neural networks.
\newblock In {\em 4th International Conference on Learning Representations,
  {ICLR} 2016, San Juan, Puerto Rico, May 2-4, 2016, Conference Track
  Proceedings}, 2016.

\bibitem{gru}
Kyunghyun Cho, Bart van Merrienboer, {\c{C}}aglar G{\"{u}}l{\c{c}}ehre, Dzmitry
  Bahdanau, Fethi Bougares, Holger Schwenk, and Yoshua Bengio.
\newblock Learning phrase representations using {RNN} encoder-decoder for
  statistical machine translation.
\newblock In {\em Proceedings of the 2014 Conference on Empirical Methods in
  Natural Language Processing, {EMNLP} 2014, October 25-29, 2014, Doha, Qatar},
  pages 1724--1734, 2014.

\bibitem{universal-approximation-theorem}
George Cybenko.
\newblock Approximation by superpositions of a sigmoidal function.
\newblock {\em {MCSS}}, 5(4):455, 1992.

\end{thebibliography}

\end{footnotesize}


\end{document}